\def\BibTeX{{\rm B\kern-.05em{\sc i\kern-.025em b}\kern-.08em
    T\kern-.1667em\lower.7ex\hbox{E}\kern-.125emX}}
\newcommand{\Omegab}{\boldsymbol{\Omega}}
\newcommand{\Lcrw}{\Lc_{\rm{RW}}}
\newcommand{\majmaj}{\textbf{\texttt{maj-maj}}\xspace}
\newcommand{\majmin}{\textbf{\texttt{maj-min}}\xspace}
\newcommand{\minmin}{\textbf{\texttt{min-min}}\xspace}
\newcommand{\Wbar}{\overline{\W}}
\newcommand{\Winf}{\W_\infty}
\newcommand{\Hbinf}{\Hb_\infty}
\newcommand{\dt}[1]{\frac{\mathrm{d}{#1}}{\mathrm{d}t}}
\newcommand{\smat}{\Y}
\newcommand{\wbars}{\overline{w}}
\newcommand{\hbars}{\overline{h}}
\newcommand{\Hbar}{\overline{\Hb}}
\newcommand{\Lambdab}{\boldsymbol{\Lambda}}
\newcommand{\smatbar}{\Z}
\newcommand{\Sbar}{\Z}
\newcommand{\sbar}{\z}
\newcommand{\Lb}{\vct{L}}
\setlist[itemize]{leftmargin=5mm}
\newcommand{\Vb}{{\mtx{V}}}
\newcommand{\vct}[1]{\bm{#1}}
\newcommand{\mtx}[1]{\bm{#1}}
\newcommand{\tsn}[1]{{\left\vert\kern-0.25ex\left\vert\kern-0.25ex\left\vert #1 
    \right\vert\kern-0.25ex\right\vert\kern-0.25ex\right\vert}}
\definecolor{darkred}{RGB}{150,0,0}
\definecolor{darkgreen}{RGB}{0,150,0}
\definecolor{darkblue}{RGB}{0,0,200}
\newcommand{\Rb}{\mathbf{R}}
\newcommand{\diag}[1]{\operatorname{diag}(#1)}
\newcommand{\appropto}{\mathrel{\vcenter{
  \offinterlineskip\halign{\hfil$##$\cr
    \propto\cr\noalign{\kern2pt}\sim\cr\noalign{\kern-2pt}}}}}
\newcommand{\cut}[1]{\textcolor{red}{}}
\newcommand{\W}{{\vct{W}}}
\newcommand{\Ab}{{\vct{A}}}
\newcommand{\Yb}{\mathbf{Y}}
\newcommand{\Z}{\vct{Z}}
\newcommand{\Ub}{\vct{U}}
\newcommand{\Hb}{{\vct{H}}}
\newcommand{\Y}{\vct{Y}}
\newcommand{\thetab}{\boldsymbol{\theta}}
\newcommand{\x}{\vct{x}}
\newcommand{\ub}{\vct{u}}
\newcommand{\Bb}{\vct{B}}
\newcommand{\eb}{\vct{e}}
\newcommand{\z}{\vct{z}}
\newcommand{\hb}{\vct{h}}
\newcommand{\Rc}{\mathcal{R}}
\newcommand{\Lc}{\mathcal{L}}
\newcommand{\beq}{\begin{equation}}
\newcommand{\eeq}{\end{equation}}
\newcommand{\bea}{\begin{align}}
\newcommand{\eea}{\end{align}}
\newcommand{\R}{\mathbb{R}}
\newcommand{\nn}{\notag}
  \newcommand{\Sigmab}{\boldsymbol\Sigma}
\DeclarePairedDelimiterX{\inp}[2]{\langle}{\rangle}{#1, #2}
\newcommand{\Id}{\mathds{I}}
\newcommand{\ones}{\mathds{1}}
\newcommand{\zeros}{\mathbf{0}}
\theoremstyle{plain}
\newtheorem{theorem}{Theorem}[section]
\newtheorem{proposition}[theorem]{Proposition}
\newtheorem{corollary}[theorem]{Corollary}
\theoremstyle{definition}
\theoremstyle{remark}
\newtheorem{remark}[theorem]{Remark}
\let\href\undefined          
\newcommand{\href}[2]{#2}    
\begin{document}

\title{Why Loss Re-weighting Works If You Stop Early: \\
           Training Dynamics of Unconstrained Features
}


\author{\IEEEauthorblockN{Yize Zhao, Christos Thrampoulidis}
\IEEEauthorblockA{
\textit{The University of British Columbia}\\
Vancouver, Canada \\
zhaoyize@ece.ubc.ca,
cthrampo@ece.ubc.ca}
}

\maketitle

\begin{abstract}
The application of loss reweighting in modern deep learning presents a nuanced picture. While it  fails to alter the terminal learning phase in overparameterized deep neural networks (DNNs) trained on high-dimensional datasets, empirical evidence consistently shows it offers significant benefits early in training. To transparently demonstrate and analyze this phenomenon, we introduce a small-scale model (SSM). This model is specifically designed to abstract the inherent complexities of both the DNN architecture and the input data, while maintaining key information about the structure of imbalance within its spectral components.  On the one hand, the SSM reveals how vanilla empirical risk minimization preferentially learns to distinguish majority classes over minorities early in training, consequently delaying minority learning. In stark contrast, reweighting restores balanced learning dynamics, enabling the simultaneous learning of features associated with both majorities and minorities. 
\end{abstract}

\begin{IEEEkeywords}
Machine Learning, Loss Reweighting, Training Dynamics, Overparameterization, Class Imbalance
\end{IEEEkeywords}

\section{Introduction}
Classification of imbalanced datasets is a pervasive challenge in practical machine learning. Most real-world datasets exhibit classes with varying numbers of examples, where classes with significantly fewer training examples are commonly referred to as minorities. A classical and widely adopted approach to learning from imbalanced data is \emph{reweighting}, which modifies the loss function to effectively increase the contribution of minority class examples during training. Concretely, instead of a learning model $f$ that minimizes vanilla empirical risk $\Lc(f) = \frac{1}{n}\sum_{i\in[n]}\ell\left(f(\x_i), y_i\right)$, over training examples of feature-label pairs $(\x_i,y_i)$, loss reweighting minimizes 
\begin{align}\label{eq:loss weighted}
\Lcrw(f) = \frac{1}{n}\sum\nolimits_{i\in[n]} \omega_{y_i} \cdot \ell(f(\x_i), y_i),
\end{align}
where $\omega_{y_i}\propto 1/\hat{\pi}_{y_i}$ is typically set proportional to the inverse empirical frequency\footnote{In practice, it is common setting $\omega_{y_i}\propto 1/\hat{\pi}_{y_i}^\gamma$ where $\gamma\in(0,1)$ is tuned empirically, e.g.,  \cite{TengyuMa,Menon}.} of the class $y_i\in[k]$ to which example $i\in[n]$ belongs. This approach has historically been popular not only due to its simplicity but also because of its statistical optimality: in the population limit, $\Lcrw$ is known to optimize for balanced accuracy, a metric that weighs class-conditional accuracies equally rather than by their class frequencies (see Appendix \ref{sec:reweight_background} for further background).

However, modern machine learning practice, particularly with large-scale deep neural networks (DNNs) and complex high-dimensional datasets, presents a more nuanced picture regarding the statistical optimality and practical efficacy of $\Lcrw$. On one hand, empirical observations \cite{byrd2019effect} and subsequent theoretical justifications \cite{sagawa2020investigation,kini2021label,xu2021understanding} have demonstrated that $\Lcrw$ often fails to substantially improve the accuracy of minority classes over vanilla $\Lc$ if model training continues for a large number of iterations until convergence. On the other hand, empirical evidence also suggests that $\Lcrw$ can improve performance over vanilla $\Lc$ when model training is subjected to early stopping \cite{byrd2019effect,xu2021understanding}. Complementary to this, emphasizing the early-training benefits of reweighting, it has also been found that it can further boost the performance of alternative modern loss functions specifically proposed for imbalanced classification \cite{TengyuMa,Menon,kini2021label,CDT,li2021autobalance}.

A clear understanding of these early-training benefits of $\Lcrw$ is still lacking:
\begin{center}
\emph{Why does reweighting help early in training?} 
\emph{How does it modify the learning dynamics to favor minorities?}
\end{center}

The inherent complexity of DNNs and of real-world datasets makes it challenging to directly answer these questions. 

Our contribution is to identify and analyze a small-scale model (SSM) that transparently demonstrates the exact impact of reweighting on training dynamics, specifically favoring minority classes.

In Section \ref{sec:large scale}, as a motivating experiment, we compare how confusion matrices evolve when training without and with reweighting on a real dataset. This reveals the strong and rapid impact of reweighting at early training stages; see Fig. \ref{fig:confusion_matrix_evolution}.
In Section \ref{sec:small scale}, we introduce our SSM with the explicit goal of explaining this empirically observed behavior. With necessary abstractions of the data, the DNN, and the loss function, and appropriate interpretations of its spectral components, our SSM effectively replicates the empirically observed DNN behaviors related to reweighting.
Finally, Section \ref{sec:analysis_main} leverages the inherent simplicity of the SSM to yield a theoretical understanding of these phenomena.

\section{Motivating Experiment}\label{sec:large scale}

We empirically illustrate the different learning dynamics induced by vanilla empirical risk minimization (ERM) and loss reweighting in an imbalanced classification setting. We construct a 4-class subset of the MNIST dataset with synthetic imbalance: two classes are designated as majorities with 100 training samples each, and the remaining two as minorities with 10 samples each (imbalance ratio $R=10$). We train a 3-layer convolutional neural network (CNN) with embedding dimension $d=32$ under both standard cross-entropy (CE) and reweighted CE, using the Adam optimizer with learning rate $10^{-3}$ and batch size 64. For reweighting, class weights are set inversely proportional to class frequencies.

\begin{figure*}[t]
     \centering
     \includegraphics[width=0.9\linewidth]{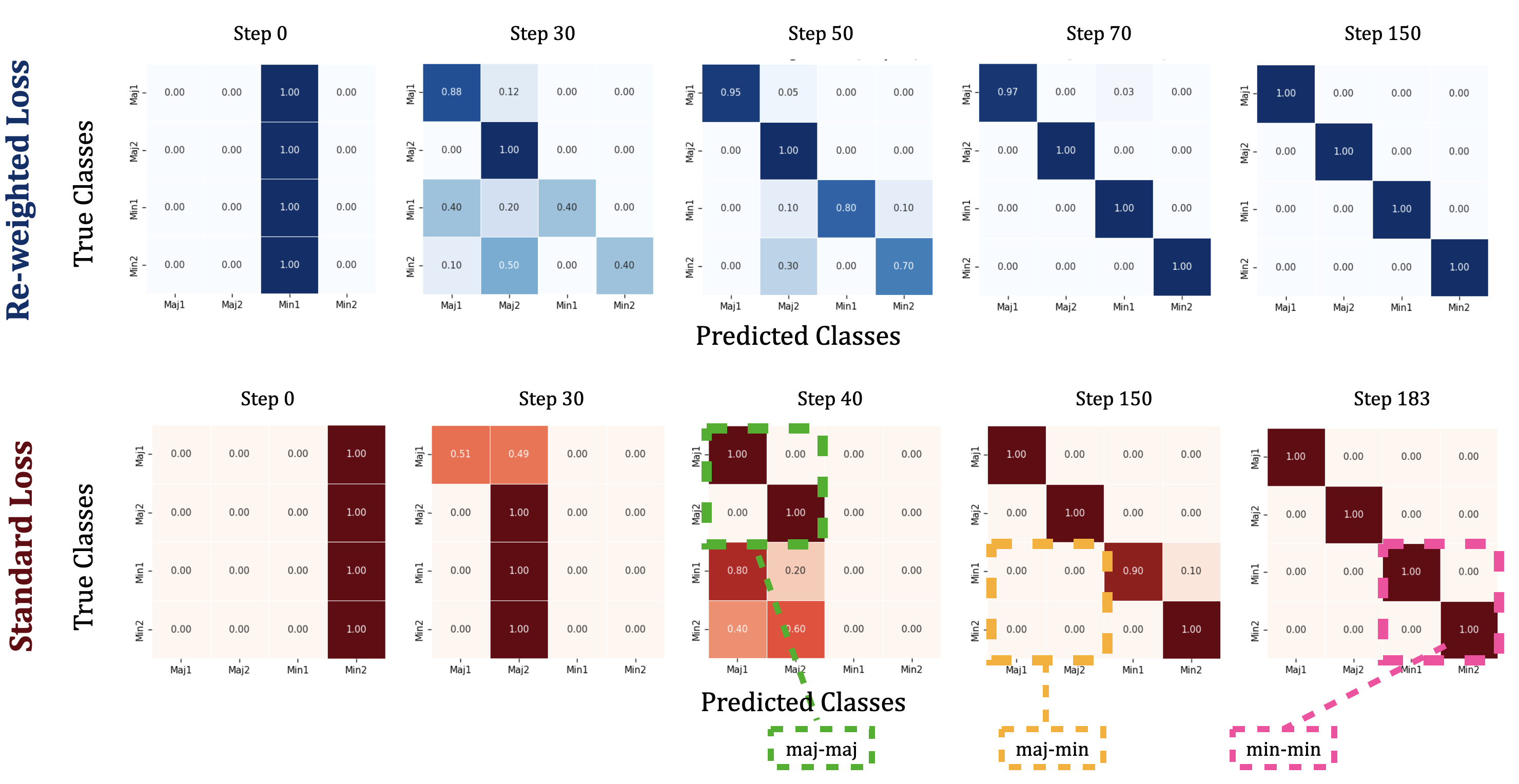}
    \caption{
    \textbf{Confusion matrix evolution during training on imbalanced MNIST} (2 majorities, 2 minorities, imbalance ratio $R=10$).
    Top row: model trained with \textbf{reweighted loss}. 
    Bottom row: model trained with \textbf{standard loss}. 
    Under \textbf{standard loss} (bottom), the model exhibits staged learning dynamics: it first learns to distinguish between majority classes (\textcolor{green}{green boxes}) by step 40, then separates majority from minority classes (\textcolor{orange}{orange boxes}) by step 150, and only later distinguishes between minority classes (\textcolor{magenta}{pink boxes}) at step 183. This progression reflects the ordering of singular values in the simplex-encoded label (SEL) matrix.
    In contrast, under \textbf{reweighted loss} (top), the model learns class distinctions more uniformly and earlier, with balanced improvements across majorities and minorities by step 70. {Our small-scale model replicates this behavior and attributes it to reweighting effectively flattens the spectrum of the labels.}
    }
     \label{fig:confusion_matrix_evolution}
 \end{figure*}

Figure \ref{fig:confusion_matrix_evolution} vividly illustrates the early benefits of loss reweighting. By tracking the confusion matrix on the training data across iterations, we observe the following distinct learning behaviors:

\begin{enumerate}[leftmargin=12pt, itemsep=0pt, parsep=0pt, topsep=0pt, partopsep=0pt]

\item  \textbf{Vanilla ERM exhibits a clear preferential learning order favoring majorities}: The model first learns to correctly classify majority classes, then gradually learns to distinguish minority classes from majority classes, and only in later stages begins to differentiate between various minority classes themselves.

\item \textbf{Reweighting eliminates this preferential ordering, enabling earlier classification of minorities.}
\end{enumerate}

See Appendix \ref{sec:additional_exp} for additional experiments, including the evolution of other training and test metrics (e.g., test-time confusion matrices and balanced accuracy) during training.

\section{Small-Scale Model}\label{sec:small scale}

\subsection{Model Description}

Our model abstracts the inherent complexities of both the DNN architecture and the input data by using a model of unconstrained features (UFM).  
It further combines this with a simplification on the loss from cross-entropy to squared-loss. For concreteness, we focus on a STEP-imbalanced setting. 

 \noindent\textbf{Architecture/data abstraction.}~The UFM substitutes the training embeddings $\hb_{\thetab}(\x_i)$, which are typically high-dimensional representations generated by the last activation layer of a DNN parameterized by $\thetab$, with directly trainable vectors $\hb_i \in \mathbb{R}^d$. This abstraction simultaneously bypasses the network architecture ($\thetab$) and the input data ($\x_i$). The underlying rationale is that overparameterized DNNs are sufficiently expressive to learn data embeddings that are primarily driven by minimizing the training loss, rather than being strictly constrained by the specific architecture. The UFM has been widely adopted and validated, e.g., \cite{yang2017breaking,mixon2020neural,fang2021exploring,zhu2021geometric,zhao2024implicit,garrod2024persistence}.
 \\
Formally, we abstract the DNN as a \textbf{bilinear model} where the logits $\Lb \in \mathbb{R}^{k \times n}$ are defined as $\Lb := \W\Hb$. Here, $\W \in \mathbb{R}^{k \times d}$ represents the network's classifiers (analogous to the last-layer weights), and $\Hb \in \mathbb{R}^{d \times n}$ is a matrix comprising the trainable embeddings $\hb_i, i\in[n]$ for each example. Both $\W$ and $\Hb$ are parameters to be optimized during training. 

 \noindent\textbf{Loss simplification.}~Instead of the cross-entropy loss, we employ the squared-loss. This simplification is a common practice in theoretical analyses, as squared-loss generally leads to more interpretable and analytically tractable learning dynamics. Specifically, we use squared-loss with a \textbf{simplex-encoded label (SEL) matrix} $\smatbar :=(\Id_k-\frac{1}{k}\ones_k\ones_k^\top)\smat$. 
 This matrix is a centered version of the one-hot encoding matrix $\Y \in \{0,1\}^{k \times n}$ of the data. The centering accounts for the property of cross-entropy to yield centered logits, i.e., such that $\mathbf{1}_k^\top\Lb=\mathbf{0}$ \cite{seli}. Viewed this way, the simplification to squared-loss can alternatively be considered a first-order approximation of the cross-entropy loss.

\noindent\textbf{Overparameterization.} To model overparameterization, within our bilinear model, we set hidden dimension $d\geq k$. This ensures model parameters exist that can perfectly interpolate the labels, meaning $\Lb=\Z$.

 \noindent\textbf{STEP-Imbalance.} For concreteness, we assume $k$ classes, where the first $k/2$ are \textbf{majorities} and the remaining $k/2$ are  \textbf{minorities}. Within each group, all classes have an equal number of examples. The number of samples in majorities is $R$ times the number of samples in minorities, where $R$ is the \textbf{imbalance ratio}. For simplicity and without compromising the generality of our findings, we set $k=4$ for our detailed discussion. See Fig.~\ref{fig:4_class_simulation}(a) the 1-hot label matrix $\Y$. 

{\begin{remark}
We view this model as the minimal canonical model capable of transparently justifying how reweighting balances feature learning rates during training. It is minimal because it essentially describes the dynamics of a two-layer linear neural network with standard basis inputs. The model is canonical in that it avoids imposing explicit assumptions on the geometry of the abstracted input data—unlike a simpler linear model would necessitate, e.g., as in \cite{sagawa2020investigation,kini2020analytic,stromberg2025thumb,lai2024sharp,mor2025analytical,behnia2022avoid}. Instead, its abstraction is based on assuming an overparameterized and sufficiently expressive architecture.
\end{remark}}

\subsection{A key Spectral Interpretation}
\begin{figure*}
    \centering
    \includegraphics[width=1\linewidth]{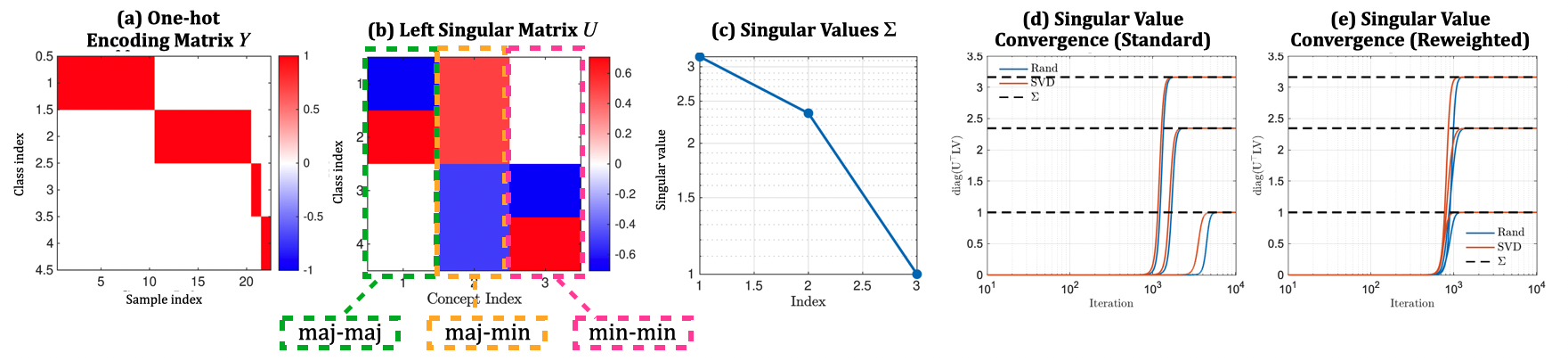}
    \caption{
    \textbf{Spectral analysis of the SSM with 4-class STEP-imbalance (imbalance ratio $R=10$).} 
    (a) One-hot label matrix $\Yb$ for two majority(top) and two minority(bottom) classes. 
    (b) Left singular vectors $\Ub$ of the SEL matrix $\Z = (I - \frac{1}{k}\ones\ones^\top)\Yb$ separate majority-majority (\textcolor{green}{green}), majority-minority (\textcolor{orange}{orange}), and minority-minority (\textcolor{magenta}{pink}) distinctions.
    (c) Singular values show the ordering of semantic importance: majority features dominate.
    (d) Under standard training, features are learned sequentially.
    (e) Reweighting equalizes learning speeds, flattening the spectrum and enabling simultaneous semantics acquisition. {(Blue is random and orange is spectral initialization.)}
    }
    \label{fig:4_class_simulation}
\end{figure*}
The SSM isolates the encoding of the imbalanced data structure within the spectral components of the SEL matrix, $\Z$, in a way that directly reveals how the model prioritizes learning about majorities over minorities.
Let $\Z = \Ub\Sigmab\Vb^\top$ be the SVD of the SEL matrix. It can be formally shown that $\Sigmab$ has $k-1$ non-zero singular values, which are categorized into \emph{three distinct levels} such that the largest and smallest values each have a multiplicity of $k/2-1$, while the middle value has a multiplicity of one. These three distinct levels correspond to three groups of principal components each associated with features that the model must separately learn to distinguish: (i) majority classes from each other (\majmaj feature), (ii) majority classes from minority classes (\majmin feature), and (iii) minority classes from each other (\minmin feature). These features are directly encoded as singular vectors of $\Z$.

 For illustration, consider the case $k=4$. Observe in Fig. \ref{fig:4_class_simulation}(b) the left singular matrix $\Ub$, where each column represents one of these features. We can interpret the sign of each of the $k$ entries in a singular vector $\ub_j$ as an indicator of how strongly the respective class is associated with that principal component: if $\ub_j[c] > 0$ or $\ub_j[c] < 0$, it signifies that class c aligns with or opposes this component, respectively; if $\ub_j[c] = 0$, the class is neutral to that component. 
  We thus find \textbf{three principal components, corresponding to distinct features, ordered by the magnitude of their respective singular values:}
\begin{enumerate}[itemsep=0pt, parsep=0pt, topsep=0pt, partopsep=0pt]
\item $\majmaj$: The first principal component ($j=1$) is neutral to minorities and distinguishes between the two majorities (indicated by the opposing signs of $\ub_1[1]$ and $\ub_1[2]$).
\item $\majmin$: The second principal component ($j=2$) distinguishes between majorities and the minorities (indicated by the opposing signs of $\ub_2[1],\ub_2[2]$ and $\ub_2[3],\ub_2[4]$).
\item $\minmin$: The last principal component ($j=3$) is neutral to majorities and distinguishes between the two minorities (indicated by the opposing signs of $\ub_3[3]$ and $\ub_3[4]$).
\end{enumerate}

\subsection{Reweighting Changes the Learning Dynamics Eliminating Preferential Learning Against Minorities}

We now leverage our SSM to illuminate how reweighting fundamentally alters the learning dynamics. Specifically, we compare the training dynamics under gradient descent for vanilla ERM versus loss reweighting:
\[
\begin{aligned}
\Lc(\W,\Hb)
&:= \frac{1}{n}\sum\nolimits_{i\in[n]}\|\z_i-\W\hb_i\|^2, \textit{versus}
\\
\Lcrw(\W,\Hb)
&:= \frac{1}{n}\sum\nolimits_{i\in[n]}\omega_{y_i}\,\|\z_i-\W\hb_i\|^2 .
\end{aligned}
\]
where the weights $\omega_{y_i}$ are set inversely proportional to the square root of the frequency of the respective class $y_i\in[k]$. Concretely, if the class $y_i$ of the $i$-th example is a  majority, then $\omega_{y_i}\propto\sqrt{(R+1)k/(2R)}$, otherwise, for a minority class, $\omega_{y_i}\propto\sqrt{(R+1)k/2}$.\footnote{

{Note that this is equivalent to using weights $\omega_{y_i}\propto 1/\hat{\pi}_{y_i}^\gamma$ with  $\gamma=0.5$. This particular choice of $\gamma$ is motivated by our theoretical analysis in Section \ref{sec:analysis_main} and, interestingly, aligns with empirical findings from prior work \cite{TengyuMa,Menon,behnia2023implicit,kini2021label,CDT} who report $\gamma=1/2$ as a robust choice in practice.}} For training, we use gradient descent with small random initialization and small step-size.

Fig. \ref{fig:4_class_simulation}(d) and \ref{fig:4_class_simulation}(e) tracks the rates at which the singular values of the logits $\Lb_t$ reach their terminal values in both scenarios (vanilla ERM vs. reweighting) across training iterations $t$. We observe the following about vanilla ERM versus reweighting:
\begin{enumerate}[itemsep=0pt, parsep=0pt, topsep=0pt, partopsep=0pt]
\item \textbf{Terminal Phase Equivalence:} Due to overparameterization ($d\geq k$), there is \emph{no} difference in the terminal phase. In both cases, the singular values of $\Lb_t$ converge to the singular values of the SEL matrix $\Z$. 

\item \textbf{Preferential learning in vanilla ERM hurts minorities:} With vanilla ERM ($\Lc$), the model learns the three singular values in a highly preferential order, with those corresponding to the larger singular values of $\Z$ being learned first.

\item \textbf{Reweighting restores balanced Learning:} In contrast, loss reweighting ($\Lcrw$) eliminates this preferential ordering, leading to all singular values being learned at approximately equal rates.
\end{enumerate}

Recalling that singular values directly correspond to distinct  features,  our SSM thus captures the \textbf{early training benefit of reweighting:} While vanilla ERM $\Lc$ learns \majmaj, \majmin, \minmin features in that specific order (thereby favoring majority learning and delaying minority learning until the end of training), reweighted ERM $\Lcrw$ learns all features at the same rate, thus enabling accurate classification of minorities much earlier. 
Note the direct one-to-one correspondence of these behaviors to what we observed empirically in the DNN experiment presented in Section \ref{sec:large scale}.
\section{Analysis}\label{sec:analysis_main}
We now derive the gradient flow dynamics of the Small-Scale Model (SSM) to theoretically ground the observations in Section \ref{sec:small scale}.
Our analysis proceeds in three steps: (A) We identify a connection between the square-loss UFM and the framework of \citet{saxe2013exact}, establishing closed-form dynamics for vanilla ERM ($\Lc$) under spectral initialization; (B) Leveraging closed-form expressions for the spectral factors of the SEL matrix, we derive analogous closed-form dynamics for reweighted ERM ($\Lcrw$) in Theorem \ref{thm:ours_main}, proving that reweighting effectively flattens the spectrum of the label matrix; (C) Finally, we contrast the relative rates of learning of features, demonstrating that reweighting compresses the effective learning window independent of the imbalance ratio.
\subsection{Dynamics of Vanilla ERM}
Consider the minimization of the UFM with squared loss $\Lc(\W,\Hb)=\frac{1}{2}\|(\Sbar-\W\Hb)\|^2$
This objective fits logits $\Lb = \W\Hb$ to the centered sparsity matrix $\Sbar$ and is mathematically equivalent to training a two-layer linear network with orthogonal inputs. We can therefore adapt the solutions provided by \citet{saxe2013exact} and \citet{bach_saxe}.

Recall $\Sbar = \Ub\Sigmab\Vb^\top$ is the SVD of the SEL matrix. Assume spectral initialization where $\W(0) = e^{-\delta} \Ub \Rb^\top$ and $\Hb(0) = e^{-\delta} \Rb \Vb^\top$ for a small scale $e^{-\delta}$ and arbitrary rotation $\Rb$. As shown in Proposition \eqref{thm:saxe} (see Appendix \ref{app:saxe_connections}), the gradient flow dynamics decouple along the singular directions of $\Sbar$. Specifically, the parameters evolve as:
\begin{align}
    \W(t) = \Ub\sqrt{\Sigmab}\sqrt{\Ab(t)}\Rb^\top, \quad \Hb(t) = \Rb\sqrt{\Sigmab}\sqrt{\Ab(t)}\Vb^\top,
\end{align}
where $\Ab(t) = \text{diag}(a_1(t), \dots, a_r(t))$ describes the scalar evolution of each mode. The factor $a_i(t)$ follows a sigmoid trajectory depending on the singular value $\sigma_i$:
\begin{align}\label{eq:vanilla_dynamics_main}
    a_i(t) = \frac{1}{1 + (\sigma_i e^{2\delta} - 1)e^{-2\sigma_i t}}.
\end{align}
Eq. \eqref{eq:vanilla_dynamics_main} implies that the time $T_i$ required to learn the $i$-th feature (i.e., for $a_i(t) \to 1$) is inversely proportional to its singular value: $T_i \propto {1}/{\sigma_i}$. 
In the STEP-imbalanced setting, the spectrum $\Sigmab$ is skewed. Majority-associated singular values are roughly $\sqrt{R}$ times larger than minority ones. Consequently, vanilla ERM learns majority features at time $T_{\text{maj}} \propto 1/\sqrt{R}$ and minority features at $T_{\text{min}} \propto 1$. The gap between these times increases with the imbalance ratio $R$, explaining the staged learning observed in Fig. \ref{fig:confusion_matrix_evolution} (bottom).

\subsection{Dynamics of Reweighted ERM}
We now analyze the dynamics under the reweighted objective. Consider the weighted square-loss function $\Lcrw(\W,\Hb)$ with class-dependent weights $\omega_i$. In matrix form, $\Lcrw(\W,\Hb)=\frac{1}{2}\|(\Sbar-\W\Hb)\Omegab^{1/2}\|_F^2$, where $\Omegab=\text{diag}(\omega_1, \dots, \omega_n)$.

In the STEP-imbalanced setting with $R$ imbalance ratio, without loss of generality, we assume the first $k/2$ classes are majorities and that minorities have 1 example each; thus, the total number of examples is $n=Rk/2+k/2=(R+1)k/2$. the right singular vectors $\Vb$ take the following closed form:
\begin{align}\label{eq:Vb_def_main}
\Vb^\top = \left[\begin{array}{ccc}
\sqrt\frac{1}{R}\mathbb{F}^\top\otimes\ones_R^\top & \mathbf{0} 
\\-\sqrt{\frac{2}{(R+1)k}}\ones_{Rk/2}^\top  & \sqrt{\frac{2}{(R+1)k}}\ones_{k/2}^\top \\ 
\mathbf{0} & \mathbb{F}^\top
\end{array}\right],
\end{align}
where $\mathbb{F} \in \mathbb{R}^{k/2 \times (k/2-1)}$ is an orthonormal basis for the subspace orthogonal to $\ones_{k/2}$ \citep{seli}. 

We set the weights inversely proportional to the square root of the respective class frequency. That is, the weight for majority samples is $\sqrt{n/R} = \sqrt{(R+1)k/(2R)}$, while for the minorities it is $\sqrt{R}$ times larger. Dropping the constant factor $\sqrt{k/2}$ for simplicity, we choose:
\begin{align}\label{eq:weights STEP_main}
\Omegab:=\sqrt{R+1}\cdot\diag{\begin{bmatrix} \sqrt\frac{1}{R}\,\ones_{R k/2}^\top & \ones_{k/2}^\top \end{bmatrix}}
\,.
\end{align}
\begin{theorem}\label{thm:ours_main} Consider gradient flow (GF) dynamics for minimizing the \textbf{weighted} square-loss UFM 
with the weight matrix in Eq. \eqref{eq:weights STEP_main} under an $R$-STEP-imbalanced setting. Assume spectral initialization:
$
\W(0)=e^{-\delta}\Ub\Rb^\top$ and $\Hb(0)=e^{-\delta}\Rb\Vb^\top$, for partial orthogonal matrix $\Rb\in\R^{d\times (k-1)}$, initialization scale $e^{-\delta}$, and SVD $\Sbar=\Ub\Sigmab\Vb^\top$.
Finally, let $\Lambdab = \text{diag}(\lambda_1, \dots, \lambda_{k-1})$ be the \textbf{effective weight matrix}, where the diagonal entries $\lambda_i$ are defined as:
\begin{align}\label{eq:lambdab def2_main}
    \lambda_i:=\begin{cases}
        \sqrt{\frac{R+1}{R}} & i\in[k/2-1]
        \\
        \frac{\sqrt{R}+1}{\sqrt{R+1}} & i=k/2
        \\
        \sqrt{R+1} & i=k/2+1,\ldots,k-1
    \end{cases}
\end{align}
Then the iterates $\W(t),\Hb(t)$ of GF evolve as follows:
\begin{align}
    \W(t) = \Ub\sqrt{\Sigmab}\sqrt{\Bb(t)}\Rb^\top \quad\text{and} \quad \Hb(t) = \Rb\sqrt{\Sigmab}\sqrt{\Bb(t)}\Vb^\top
\end{align}
for $\Bb(t)=\diag{\beta_1(t),\ldots,\beta_{k-1}(t)}$ with
\begin{align}\label{eq:a_i(t) exact_main}
    \beta_i(t)=\frac{1}{ 1 + ( \sigma_i e^{2\delta} -1) e^{-2\sigma_i \lambda_i t}}
    , ~i\in[k-1].
\end{align}
Moreover, the time-rescaled factors $\beta_i(\delta t)$ converge to a step function as $\delta\rightarrow\infty$:
\begin{align}
    \beta_i(\delta t)\rightarrow \frac{1}{1+\sigma_i}\ones[{t=T_i}]+ \ones[{t>T_i}],
\end{align}
where $T_i=1/(\sigma_i\lambda_i)$. Thus, the $i$-th component is learned at a time inversely proportional to $\lambda_i\cdot\sigma_i$.
\end{theorem}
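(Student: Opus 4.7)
The plan is to carry out a Saxe-style decoupling argument adapted to the reweighted loss. The central algebraic fact that makes everything work is the diagonal identity $\Vb^\top\Omegab\Vb=\Lambdab$, which means the weighting converts each singular value $\sigma_i$ into an effective learning rate $\sigma_i\lambda_i$ without mixing spectral modes. First I would write down the gradient flow for $\Lcrw(\W,\Hb)=\tfrac12\|(\Sbar-\W\Hb)\Omegab^{1/2}\|_F^2$,
\[
\dot{\W}=(\Sbar-\W\Hb)\,\Omegab\,\Hb^\top,\qquad \dot{\Hb}=\W^\top(\Sbar-\W\Hb)\,\Omegab,
\]
and observe that the balance invariant $\tfrac{d}{dt}(\W^\top\W-\Hb\Hb^\top)=0$ still holds in the weighted case and vanishes at the spectral initialization, so $\W^\top\W=\Hb\Hb^\top$ is preserved throughout.

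The key algebraic step is verifying $\Vb^\top\Omegab\Vb=\Lambdab$ by direct block computation from the closed form \eqref{eq:Vb_def_main}. Splitting $\Vb^\top=[\Vb_1^\top,\Vb_2^\top]$ along the majority/minority sample partition and using that $\Omegab$ has first $Rk/2$ diagonal entries equal to $\sqrt{(R+1)/R}$ and last $k/2$ diagonal entries equal to $\sqrt{R+1}$, all off-diagonal blocks cancel thanks to $\mathbb{F}^\top\ones_{k/2}=0$, while the Kronecker identity $(\mathbb{F}^\top\otimes\ones_R^\top)(\mathbb{F}\otimes\ones_R)=R\,\Id_{k/2-1}$ collapses the \majmaj block. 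Reading off the diagonal yields $\sqrt{(R+1)/R}$ for the $k/2-1$ \majmaj modes, $\sqrt{R/(R+1)}+1/\sqrt{R+1}=(\sqrt{R}+1)/\sqrt{R+1}$ for the middle \majmin mode, and $\sqrt{R+1}$ for the $k/2-1$ \minmin modes, matching \eqref{eq:lambdab def2_main}.

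Substituting the ansatz into the flow and using $\Rb^\top\Rb=\Ub^\top\Ub=\Id_{k-1}$, the residual becomes $\Sbar-\W\Hb=\Ub\Sigmab(\Id-\Bb)\Vb^\top$. Combining the two ODEs and projecting via $\Ub^\top(\cdot)\Vb$, the spectral content $\Ub^\top\W\Hb\Vb=\Sigmab\Bb$ evolves as $\tfrac{d(\Sigmab\Bb)}{dt}=2\Sigmab^2\Bb(\Id-\Bb)\Lambdab$, where $\Lambdab$ enters precisely through $\Vb^\top\Omegab\Vb$. Since $\Sigmab,\Bb,\Lambdab$ are diagonal and commute, this reduces entry-wise to the logistic ODE $\dot{\beta}_i=2\sigma_i\lambda_i\beta_i(1-\beta_i)$. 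The spectral initialization forces $\sqrt{\Sigmab}\sqrt{\Bb(0)}=e^{-\delta}\Id$, giving $\beta_i(0)=e^{-2\delta}/\sigma_i$, and integrating the logistic ODE yields \eqref{eq:a_i(t) exact_main}. For the rescaled limit I would rewrite the denominator's exponent as $2\delta-2\sigma_i\lambda_i\delta t=-2\delta(\sigma_i\lambda_i t-1)$ and analyze the three cases $t<T_i$, $t=T_i$, and $t>T_i$ with $T_i=1/(\sigma_i\lambda_i)$, producing the claimed step function as $\delta\to\infty$.

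The principal technical hurdle is the diagonal identity $\Vb^\top\Omegab\Vb=\Lambdab$: although it is an exact algebraic fact, it is not obvious a priori and it is precisely what enables the decoupling into $k-1$ independent logistic ODEs under reweighting. The middle entry $\lambda_{k/2}=(\sqrt{R}+1)/\sqrt{R+1}$ interpolates between the majority weight $\sqrt{(R+1)/R}$ and the minority weight $\sqrt{R+1}$, reflecting the mixed \majmin character of that spectral direction. Once this identity is in hand, the remainder of the argument parallels the vanilla Saxe derivation (Proposition \ref{thm:saxe}) with $\sigma_i$ replaced by the effective rate $\sigma_i\lambda_i$.
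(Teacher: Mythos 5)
Your proposal is correct and follows essentially the same route as the paper: change of variables into the SVD basis, the key identity $\Vb^\top\Omegab\Vb=\Lambdab$ (which the paper also verifies by the same block computation, stated as Eq.~\eqref{eq:lambdab def} in Appendix~\ref{app:reweight}), and the reduction to decoupled scalar dynamics with effective rate $\sigma_i\lambda_i$. The only cosmetic differences are that you make the balance invariant $\W^\top\W-\Hb\Hb^\top$ explicit and integrate the logistic ODE for $\beta_i$ directly, whereas the paper keeps the diagonal factors $\Wbar,\Hbar$ separate and imports the closed-form solution from \citet{bach_saxe}.
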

\begin{proof}
 The GF updates are given by:
\begin{subequations}\label{eq:original weight_main}
\begin{align}
    \dt{\W(t)} &= -\left(\smatbar\Omegab^{1/2}-\W(t)\Hb(t)\Omegab^{1/2}\right)\Omegab^{1/2}\Hb(t)^\top \nonumber \\
    &= -\left(\smatbar-\W(t)\Hb(t)\right)\Omegab\Hb(t)^\top
    \\
        \dt{\Hb(t)} &= -\W(t)^\top\left(\smatbar\Omegab^{1/2}-\W(t)\Hb(t)\Omegab^{1/2}\right) \Omegab^{1/2} \nonumber \\
        &= -\W(t)^\top\left(\smatbar-\W(t)\Hb(t)\right) \Omegab
\end{align}
\end{subequations}
where $\W(0)=e^{-\delta}\Ub\Rb^\top$ and $\Hb(0)=e^{-\delta}\Rb\Vb^\top$. As in \citet{saxe2013exact,saxe2019mathematical}, we change variables to $\Wbar(t), \Hbar(t)$ defined as
\[
\W(t)=\Ub\Wbar(t)\Rb^\top \qquad\text{and}\qquad\Hb(t)=\Rb\Hbar(t)\Vb^\top\,.
\]
At $t=0$, these matrices are diagonal and equal to $e^{-\delta}\Id$ by the initialization assumption. Substituting these into the update equations \eqref{eq:original weight_main} and using $\smatbar=\Ub\Sigmab\Vb^\top$, we obtain:
\begin{subequations}\label{eq:original weight 2_main}
\begin{align}
    \Ub\dt{\Wbar(t)}\Rb^\top &=
    \Ub\left(\Sigmab-\Wbar(t)\Hbar(t)\right)\Vb^\top\Omegab\Vb\Hbar(t)^\top\Rb^\top
    \\
        \Rb\dt{\Hbar(t)}\Vb^\top &= \Rb\Wbar(t)^\top\left(\Sigmab-\Wbar(t)\Hbar(t)\right)\Vb^\top \Omegab\,.
\end{align}
\end{subequations}
Left and right multiplying these by the partial unitary matrices $\Ub,\Rb,\Vb$ yields the term $\Vb^\top\Omegab\Vb$. Using the closed-form definition of $\Vb$ in Eq. \eqref{eq:Vb_def_main} and weights $\Omegab$ in Eq. \eqref{eq:weights STEP_main}, a direct algebraic calculation yields $\Vb^\top\Omegab\Vb = \Lambdab$ defined in Eq. \eqref{eq:lambdab def2_main}. Thus, we arrive at:
\begin{subequations}\label{eq:original weight 3_main}
\begin{align}
    \dt{\Wbar(t)} &= -
    \left(\Sigmab-\Wbar(t)\Hbar(t)\right)\Lambdab\Hbar(t)^\top
    \\
        \dt{\Hbar(t)}&= - \Wbar(t)^\top\left(\Sigmab-\Wbar(t)\Hbar(t)\right)\Lambdab\,.
\end{align}
\end{subequations}
Since $\Lambdab$ and $\Sigmab$ are diagonal, and $\Wbar,\Hbar$ are initialized as diagonal matrices, they remain diagonal throughout training. This results in a decoupled system of differential equations identical in form to the unweighted case in \citet{saxe2013exact}, but with effective rates scaled by $\Lambdab$. The remainder of the proof follows analogously and is omitted due to space limits.
\end{proof}

\subsection{Implications for Learning Times}
The key implication of Theorem \ref{thm:ours_main} is that reweighting modifies the learning rate for each feature $i$ by a factor of $\lambda_i$. A critical metric for comparison is the \emph{effective learning window} $\Delta T$, defined as the relative time difference between learning the first and last principal components:
\begin{align}
    \Delta T := \frac{T_{\max} - T_{\min}}{T_{\min}}.
\end{align}

For vanishing initialization, we find that under vanilla ERM, the window scales with the imbalance ratio $R$ (see Cor. \eqref{cor:delta}):
\begin{align}
    \Delta T_{\rm{vanilla}} = \sqrt{R}-1.
\end{align}
This value grows unboundedly with $R$, explaining the severe delay in minority learning. In contrast, for reweighted ERM, the effective weights $\lambda_i$ counteract the spectral decay. Specifically, the singular values of the effective sparsity matrix become equal. The window becomes:
\begin{align}
    \Delta T_{\rm{RW}} = \sqrt{2}\frac{\sqrt{R+1}}{\sqrt{R}+1} - 1.
\end{align}
This value is upper bounded by $\sqrt{2}-1$ irrespective of the imbalance $R$, confirming that reweighting effectively compresses the learning timeline and enables simultaneous feature acquisition. See Appendix \ref{app:saxe_connections} for full analysis and further experimental validation.

\section{Future Work}
Our model already rather effectively captures the early-training impacts of reweighting under overparameterization. In future work, we aim to address limitations by extending the analysis beyond squared loss and also to test-time dynamics.

\section{Acknowledgments}
This work was funded by the NSERC Discovery Grant No. 2021-03677, the Alliance Grant ALLRP 581098-22, and an Alliance Mission Grant. The authors also acknowledge use of the Sockeye cluster by UBC Advanced Research Computing.

\bibliography{refs,transformers,refs_NC,bib_extra,compbib}

\onecolumn
\section{Background on Loss Reweighting}
\label{sec:reweight_background}

\subsection{Statistical Optimality in the Classical Regime}\label{sec:population}

Loss reweighting is theoretically grounded, being statistically optimal in the population 
limit for optimizing balanced accuracy, which weighs class-conditional 
accuracies equally rather than by class frequencies.
Concretely, let $\pi_c$ be the prior probability of class 
$c\in[k]$ in a $k$-class classification dataset. The standard 
error of a hypothesis $h$ is 
$\Rc(h):=\sum_{c\in[k]}\pi_c \Rc_c(h)=
\mathbb{E}_{c\sim P(c), \x\sim P(\x|c)}[\mathbf{1}[h(\x)\neq c]],$
where $\Rc_c(h)=\mathbb{E}_{\x\sim P(\x|c)}[\mathbf{1}[h(\x)
\neq c]]$ is the class-conditional error. 
In practice, given $n$ samples $(\x_i)_{i=1}^n$ 
with labels $y_i \in [k]$, we minimize the empirical risk 
\begin{align}\label{eq:loss standard}
\Lc(h) = \frac{1}{n}\sum_{i=1}^n\ell\left(h(\x_i), y_i\right),
\end{align}
where $\ell$ denotes a differentiable proxy to the zero-one loss.
The balanced error is $\Rc_{\text{bal}}(h):=\frac{1}{k}\sum_{c\in[k]}
\Rc_c(h)$, with empirical counterpart
$\Lc_{\text{bal}}(h) = \frac{1}{k}\sum_{c\in[k]}
\frac{1}{n_c}\sum_{i:y_i=c}\ell\left(h(\x_i), c\right)$,
where $n_c=|\{i:y_i=c\}|$. This can be rewritten as a reweighted loss:
\begin{align}\label{eq:loss weighted}
\Lc_{\text{reweight}}(h) = \frac{1}{n}\sum_{i=1}^n \omega_i \cdot \ell(h(\x_i), y_i),
\end{align}
where $\omega_i=1/(n_{y_i}/n)$ is the inverse empirical frequency of each class.
Under mild assumptions, this re-weighted loss is statistically optimal when 
$n\rightarrow\infty$ and the hypothesis space is bounded.

\subsection{Nuanced Picture in the Overparameterization Regime}\label{sec:literature}

However, modern machine learning challenges the (classical) population analysis: 
neural networks operate in high-dimensional regimes where the hypothesis-space 
dimension exceeds the training sample count. In such overparameterized settings, 
classical methods often lose their optimality. Indeed, several works 
\citep{byrd2019effect,KimKim} have identified weighted cross-entropy's 
failure to substantially improve balanced accuracy of well-trained overparameterized 
deep neural networks on imbalanced datasets.

Investigating this behavior theoretically, \citet{sagawa2020investigation} and \citet{kini2021label} attribute this failure to the implicit optimization bias of gradient-based optimizers used to minimize the loss during training. A long line of research \citep{gunasekar2018implicit,gunasekar2018characterizing,soudry2018implicit,ji2019implicit,lyu2019gradient} has supported the idea that empirical risk minimization of overparameterized networks—where many minimizers of the loss can exist—is empirically successful precisely because first-order gradient optimizers are biased toward solutions that maximize the margin over the training data among all train loss minimizers. Applied to weighted cross-entropy, \citet{sagawa2020investigation} and \citet{kini2021label} leverage this type of analysis to demonstrate that loss reweighting has no effect on the implicit optimization bias: irrespective of the choice of weights, when trained long enough, the configuration chosen by gradient-based optimization applied to weighted cross-entropy is the same as vanilla cross-entropy.

\citet{kini2021label} identified that in this overparameterized regime reweighting still works, but should be done differently than suggested by Eq. \eqref{eq:loss weighted} and should instead be applied to the logits. See also related \citep{TengyuMa,Menon,xu2021understanding,welfert2024theoretical} and follow-up works e.g., \cite{behnia2022avoid,behnia2023implicit,wang2021importance,lai2024sharp,mor2025analytical,wang2023unified,zhai2022understanding}.

Despite being very insightful and leading to practical modifications, there are two limitations in the above implicit bias analysis: First, the implicit optimization bias analysis provides explicit characterizations of the max-margin classifiers to which optimizers converge only in limited settings (even for two-layer linear networks, it has not been rigorously shown whether gradient descent converges to the global minimizer of the non-convex max-margin optimization). Second, and perhaps more importantly, the implicit bias toward a max-margin classifier manifests very late in training: even in linear models, convergence to the max-margin solution with gradient descent is exponentially slow (convergence can be accelerated with normalization, but still requires many iterations).

On the other hand, it has been empirically observed that loss reweighting can actually be beneficial when either used with early stopping or as a complementary technique to modern loss variants optimized for the overparameterized regime, leading to empirical speed-ups and optimization boosts in early training iterations \citep{TengyuMa,xu2021understanding,li2021autobalance}. 
\section{Additional Experimental Details on Motivating Example}
\label{sec:additional_exp}
\begin{figure*}
    \centering
    \includegraphics[width=1\linewidth]{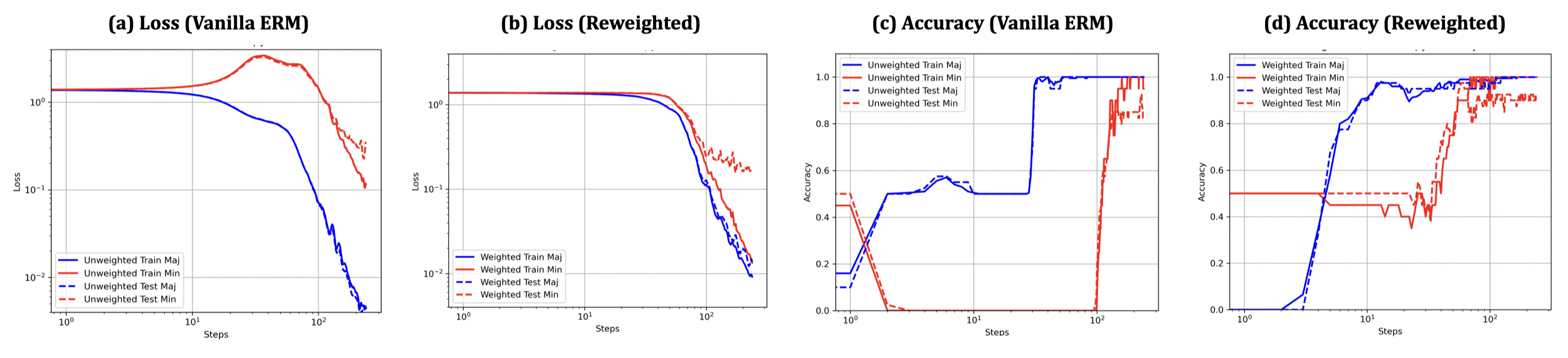}
    \caption{Training dynamics of majority vs. minority classes on an imbalanced 4-class MNIST task (\( R = 10 \), 2 majority, 2 minority). 
    \textbf{(a)} Under \textit{vanilla ERM}, minority loss increases at early stage and remains higher in training, while majority loss quickly drops. 
    \textbf{(b)} Under \textit{reweighted ERM}, both majority and minority losses decrease together, indicating balanced optimization. 
    \textbf{(c)} Vanilla ERM results in delayed learning for minorities, as majority accuracy increases early and minority accuracy only improves after 100 steps. 
    \textbf{(d)} Reweighting leads to a relatively synchronized accuracy gain for both groups, with minority test accuracy improving much earlier. }

    \label{fig:mnist_loss_acc}
\end{figure*}
\subsection{Loss Trajectories}
Figure~\ref{fig:mnist_loss_acc} (a)-(b) shows the loss curves for minority and majority classes.

\subsection{Accuracy Dynamics}
Figure~\ref{fig:mnist_loss_acc} (c)-(d) plots accuracies.

\subsection{Confusion Matrix Progression on Test Data}

To further illustrate the difference in generation ability  between standard and reweighted training, we visualize the evolution of test-set confusion matrices over training steps (Figure~\ref{fig:confmat_test}). Each matrix shows class-level prediction accuracy at a given step.

\begin{figure*}[h]
    \centering
    \includegraphics[width=\textwidth]{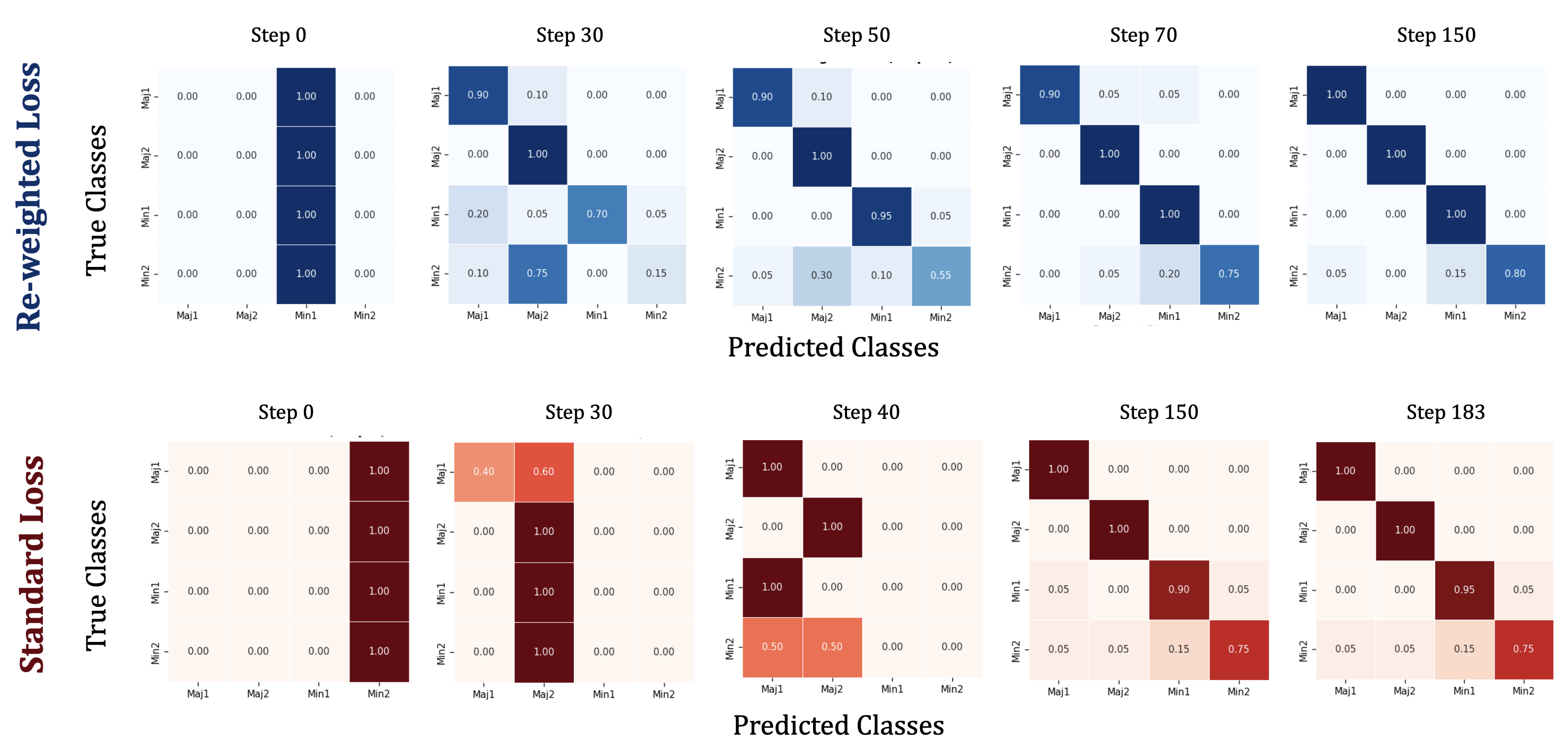}
    \caption{Confusion matrix progression on test data under reweighted (top row) vs.\ standard (bottom row) cross-entropy loss. Each matrix shows predictions across 4 classes (2 majority, 2 minority) at selected training steps. Under standard loss, the model first learns majority classes, with minority learning delayed. Reweighting enables earlier and more balanced classification across all classes.}
    \label{fig:confmat_test}
\end{figure*}

These trends validate our spectral hypothesis: \emph{vanilla CE follows the ordering of singular values of the label matrix—favoring majority-majority features—while Re-weighted flattens this spectrum and equalizes learning rates across all concept directions.}

\subsection{Sensitivity to Digit Selection}
We observe that the specific digits chosen as majority vs.\ minority classes can influence learning dynamics. For instance, using \texttt{maj} = (2,3), \texttt{min} = (0,1) leads to unusually fast minority learning under reweighted loss—minorities are correctly classified from the start. In contrast, reversing the roles (\texttt{maj} = (0,1), \texttt{min} = (2,3)) delays minority learning.

This asymmetry likely reflects differences in digit difficulty (e.g., 0 and 1 are easier to distinguish than 2 and 3). Importantly, across all configurations tested, loss reweighting consistently accelerates minority learning relative to standard loss, even if the absolute ordering varies.

\section{Analysis Details}\label{app:saxe_connections}

\subsection{Background on Vanilla ERM}
Consider the square-loss UFM:
\begin{align}\label{eq:UFM_L2}
    \min_{\W,\Hb}~~\left\{\frac{1}{2}\sum_{i\in[n]}\|\sbar_i-\W\hb_i\|^2=\frac{1}{2}\|\Sbar-\W\Hb\|^2\right\}\,.
\end{align}
which fits logits $\W\Hb$ to the centered sparsity matrix $\Sbar$. Although CE loss is more common in practice, we note that square-loss has shown competitive performance to CE minimization in various settings \cite{hui2020evaluation,demirkaya2020exploring}.
The neural-collapse literature has extensively studied Eq. \eqref{eq:UFM_L2} primarily in the balanced case (e.g., \cite{mixon2020neural,han2021neural,sukenik2023deep,tirer2022extended}) but recently also for imbalanced data (e.g., \cite{liu2024exploration,hong}). Most works focus on global minima of regularized UFM, with less attention to unregularized cases or training dynamics. While some landscape analyses provide partial answers about global convergence \citet{mixon2022neural,han2021neural}, they are limited to regularized cases and do \emph{not} characterize dynamics. For example, \cite{han2021neural}'s analysis of the 'central path' in balanced one-hot cases—these results relies on approximations. Thus, a significant gap remains in understanding UFM training dynamics, even for simple balanced one-hot data with square loss.

By interpreting the UFM with square loss in Eq. \eqref{eq:UFM_L2} as a two-layer linear network with orthogonal inputs, we identify a connection to \citet{saxe2013exact,bach_saxe}'s analysis, that to the best of our knowledge has thus far remained unexplored in the neural-collapse literature. \citet{saxe2013exact} provide explicit characterization of gradient descent dynamics (with small initialization) for square-loss UFM. The key insight in adopting their results, is rewriting \eqref{eq:UFM_L2} as $\sum_{i\in[n]}\|\sbar_i-\W\Hb\eb_i\|^2$ with \emph{orthogonal} inputs $\eb_i\in\R^m$. This enables direct application of their result, originally stated in \cite{saxe2013exact} and formalized in \cite{bach_saxe}. For completeness, we state this here in our setting and terminology as a proposition below.
\begin{proposition}[\citep{saxe2013exact,bach_saxe}]
    \label{thm:saxe} Consider gradient flow (GF) dynamics for minimizing the square-loss UFM \eqref{eq:UFM_L2}. Recall the SVD $\Sbar=\Ub\Sigmab\Vb^\top$. Assume weight initialization 
\[
\W(0)=e^{-\delta}\Ub\Rb^\top \quad\text{and}\quad \Hb(0)=e^{-\delta}\Rb\Vb^\top
\]
for some partial orthogonal matrix $\Rb\in\R^{d\times r}$ ($\Rb^\top\Rb=\Id_r$) and initialization scale $e^{-\delta}$. Then the iterates   $\W(t),\Hb(t)$ of GF are as follows:
\begin{align}
    \W(t) = \Ub\sqrt{\Sigmab}\sqrt{\Ab(t)}\Rb^\top \qquad\text{and} \qquad \Hb(t) = \Rb\sqrt{\Sigmab}\sqrt{\Ab(t)}\Vb^\top
\end{align}
for $\Ab(t)=\diag{a_1(t),\ldots,a_r(t)}$ with
\begin{align}\label{eq:a_i(t) exact}
    a_i(t)=\frac{1}{ 1 + ( \sigma_i e^{2\delta} -1) e^{-2\sigma_i t}}
    , ~i\in[r].
\end{align}
Moreover, the time-rescaled factors $a_i(\delta t)$ converge to a step function as $\delta\rightarrow\infty$ (limit of vanishing initialization):
\begin{align}
    a_i(\delta t)\rightarrow \frac{1}{1+\sigma_i}\ones[{t=T_i}]+ \ones[{t>T_i}],
\end{align}
where $T_i=1/\sigma_i$ and $\ones[{A}]$ is the indicator function for event $A$. Thus, the $i$-th component is learned at time $T_i$ inversely proportional to $\sigma_i$.
\end{proposition}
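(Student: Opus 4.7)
The plan is to diagonalize the matrix gradient-flow system via a spectral change of variables aligned with the SVD of $\Sbar$, reducing it to a family of decoupled scalar logistic ODEs whose closed-form solutions match the claimed expression. Starting from the GF equations
\[
\dot{\W} = -(\W\Hb - \Sbar)\Hb^\top, \qquad \dot{\Hb} = -\W^\top(\W\Hb - \Sbar),
\]
with initialization $\W(0) = e^{-\delta}\Ub\Rb^\top$ and $\Hb(0) = e^{-\delta}\Rb\Vb^\top$, I would introduce the auxiliary variables $\Wbar(t) := \Ub^\top\W(t)\Rb$ and $\Hbar(t) := \Rb^\top\Hb(t)\Vb$ in $\R^{r\times r}$, so that $\Wbar(0) = \Hbar(0) = e^{-\delta}\Id_r$.

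Next, I would establish forward invariance of the spectral parametrization $\W = \Ub\Wbar\Rb^\top$, $\Hb = \Rb\Hbar\Vb^\top$. Substituting into the GF equations and using $\Sbar = \Ub\Sigmab\Vb^\top$ together with the partial orthogonality $\Ub^\top\Ub = \Vb^\top\Vb = \Rb^\top\Rb = \Id_r$ shows that $\dot{\W}$ lies in $\Ub\,\R^{r\times r}\,\Rb^\top$ and $\dot{\Hb}$ lies in $\Rb\,\R^{r\times r}\,\Vb^\top$, so these subspaces are forward-invariant along the trajectory. The reduced dynamics read
\[
\dot{\Wbar} = (\Sigmab - \Wbar\Hbar)\Hbar^\top, \qquad \dot{\Hbar} = \Wbar^\top(\Sigmab - \Wbar\Hbar).
\]
Because $\Sigmab$ is diagonal and the initial conditions are diagonal, the right-hand sides preserve diagonality; by ODE uniqueness, $\Wbar(t), \Hbar(t)$ remain diagonal for all $t$. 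Writing $\Wbar = \diag{\bar{w}_1,\ldots,\bar{w}_r}$ and $\Hbar = \diag{\bar{h}_1,\ldots,\bar{h}_r}$, the system decouples into scalar pairs $\dot{\bar{w}}_i = (\sigma_i - \bar{w}_i\bar{h}_i)\bar{h}_i$ and $\dot{\bar{h}}_i = (\sigma_i - \bar{w}_i\bar{h}_i)\bar{w}_i$. The identity $\tfrac{d}{dt}(\bar{w}_i^2 - \bar{h}_i^2) = 0$ combined with the equal initialization forces $\bar{w}_i(t) = \bar{h}_i(t)$ throughout.

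Under the ansatz $\bar{w}_i(t) = \bar{h}_i(t) = \sqrt{\sigma_i\, a_i(t)}$, which matches the target form $\W = \Ub\sqrt{\Sigmab}\sqrt{\Ab}\Rb^\top$, the scalar ODE collapses to the logistic equation $\dot{a}_i = 2\sigma_i\, a_i(1 - a_i)$. Its general solution $a_i(t) = 1/(1 + C_i e^{-2\sigma_i t})$ combined with the initial condition $a_i(0) = e^{-2\delta}/\sigma_i$ pins down $C_i = \sigma_i e^{2\delta} - 1$, recovering exactly Eq.~\eqref{eq:a_i(t) exact}.

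Finally, for the vanishing-initialization limit, I would rescale time by $\delta$ and examine the exponent in the closed form: approximating $\sigma_i e^{2\delta} - 1 \approx \sigma_i e^{2\delta}$ for large $\delta$ gives $a_i(\delta t) \approx 1/(1 + \sigma_i e^{2\delta(1 - \sigma_i t)})$. Case analysis on the sign of $1 - \sigma_i t$ yields the claimed step function: the exponent diverges to $+\infty$ for $t < T_i := 1/\sigma_i$ (so $a_i \to 0$), vanishes at $t = T_i$ (giving the transition value $1/(1+\sigma_i)$), and diverges to $-\infty$ for $t > T_i$ (so $a_i \to 1$). The main obstacle is the forward-invariance argument in the second step: once the spectral subspace is shown to be preserved and diagonality follows by uniqueness, the remaining logistic integration and asymptotic limit are entirely routine.
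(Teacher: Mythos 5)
Your derivation is correct and is essentially the argument that the paper delegates to \citet{bach_saxe}: the paper's own proof of this proposition is a one-line reduction to \citep[Thm.~1]{bach_saxe} after observing that the UFM is a two-layer linear network with orthonormal (standard-basis) inputs, and the change of variables, diagonality-preservation, conserved quantity $\bar{w}_i^2-\bar{h}_i^2$, logistic ODE, and $\delta\to\infty$ limit you write out are exactly the steps of that cited proof (and coincide with the paper's explicit proof of the weighted analogue, Theorem \ref{thm:ours}, with $\Lambdab=\Id$). So you have simply made self-contained what the paper handles by citation; no gap.
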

\begin{proof}
    After having set up the analogy of our setting to that of \cite{saxe2013exact,bach_saxe}, this is a direct application of \citep[Thm.~1]{bach_saxe}. Specifically, this is made possible in our setting by: (i) interpreting the UFM with square-loss in \eqref{eq:UFM_L2} as a two-layer linear network (ii) recognizing that the covariance of the inputs (which here are standard basis vectors $\eb_j, j\in\R^m$) is the identity matrix, hence the (almost) orthogonality assumption (see \citep{saxe2013exact} and \citep[Sec.~4.1]{bach_saxe}) holds.
\end{proof}

This result requires initializing $\W,\Hb$ in a way that makes them aligned with the SVD factors of the SEL matrix. While this might appear as a strong assumption, \cite{saxe2013exact,saxe2019mathematical} conjectured and verified experimentally that the characterization remains qualitatively accurate under small random initialization. 
Our experiments with the SEL matrix confirm this - Fig. \ref{fig:convergence with weights} (middle row) shows the singular values of the logit matrix during training closely follow the predicted exponential trend in Eq. \eqref{eq:a_i(t) exact}. This reveals that dominant singular factors, corresponding to primary semantic concepts, are learned first. In the limit $t\rightarrow\infty$, the theorem shows convergence to:
\begin{align}\label{eq:Winf}
\W(t) \rightarrow \Winf:=\Ub\sqrt{\Sigmab}\Rb^\top \qquad\text{and} \qquad \Hb(t) \rightarrow \Hbinf:=\Rb\sqrt{\Sigmab}\Vb^\top,.
\end{align}
This aligns with \cite{seli}'s  regularization-path analysis of UFM with CE loss, where normalized quantities converge as regularization goes to zero.

\begin{figure*}[t]
   \centering
\hspace{-0.38in}   \includegraphics[width=1\textwidth]{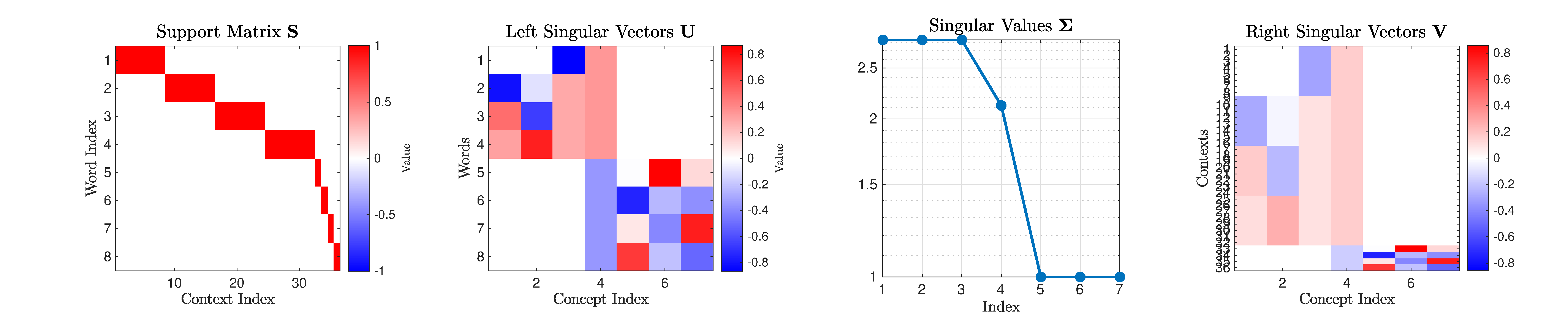}

   \vspace{1em}
\hspace{-0.38in}\includegraphics[width=1\textwidth]{figs_christos/imbalanced_without_weights.png}

\vspace{1em}\hspace{-0.38in}\includegraphics[width=1\textwidth]{figs_christos/imbalanced_with_weights.png}
  \captionsetup{width=\textwidth} \caption{ \textbf{(Top)} One-hot encoding matrix and SVD factors of SEL matrix for  STEP-imbalanced data. \textbf{(Middle)} Training dynamics of GD minimization of UFM with square loss (Eq. \eqref{eq:UFM_L2}. for two initializations: (i) SVD: initialize $\W$ and $\Hb$ as per Thm. \ref{thm:saxe} for $\delta=8$. (ii) Rand: intialize $\W$ and $\Hb$ random Gaussian scaled to match the norm of SVD initialization. Dynamics with the two initialization are shown in orange (SVD) and blue (Rand), respectively. Qualitatively the behavior is similar. With vanilla ERM, SVD factors are learned in the order of their singular values, see Fig.~~\ref{fig:convergence} for a more general synthetic setting. \textbf{(Bottom)} Training dynamics of GD minimization of UFM with weighted square loss (Eq. \eqref{eq:weighted UFM_L2}) with weights as in Eq. \eqref{eq:weights STEP}. Note that thanks to the weighting all singular factors are now learned at approximately the same rates.}
   \label{fig:convergence with weights}

\end{figure*}

\subsection{Controlling the rate of learning via Reweighting} \label{app:reweight}
Consider minimizing the following weighted version of \eqref{eq:UFM_L2}:
\begin{align}\label{eq:weighted UFM_L2}    \min_{\W,\Hb}~~\left\{\frac{1}{2}\sum_{i\in[n]} \omega_i \|\z_i-\W\hb_i\|_2^2 = \frac{1}{2}\left\|\left(\Sbar-\W\Hb\right)\Omegab^{1/2}\right\|_F^2\right\}\,,
\end{align}
where $\Omegab=\diag{[\omega_1,\ldots,\omega_m]}$ is a diagonal matrix of weights, one for each context. We consider the STEP-imbalanced one-hot classification setting described. Concretely, let the one-hot label matrix 
be
\begin{align}\label{eq:STEP}
\smat = \begin{bmatrix}
    \Id_{k/2}\otimes \ones_R^\top & \mathbf{0}_{k/2 \times k/2}
    \\
    \mathbf{0}_{k/2 \times Rk/2} & \Id_{k/2}
\end{bmatrix}
\end{align}
where $R$ is the imbalance ratio and without loss of generality we assumed that the first $k/2$ classes are majorities and that minorities have $1$ example each. Here, the choice of $1$ sample per minority is done without loss of generality and just to maintain simplicity in the formulas. Thus, the total number of examples is $n=Rk/2+k/2=(R+1)k/2$. Recall that $\smatbar=(\Id_k-\frac{1}{k}\ones_k\ones_k^\top)\smat$.

Recall that $\smatbar=\Ub\Sigmab\Vb^\top$. \citet[Lem.~A.3]{seli} derives the spectral components in closed form as follows:
\begin{align}\label{eq:Sigmab}
\Sigmab=\diag{\begin{bmatrix}\sqrt{R}\ones_{k/2-1} & \sqrt{(R+1)/2} & \ones_{k/2-1}\end{bmatrix}}\,
\end{align}
\begin{align}\nn
\Ub = \left[\begin{array}{ccc}
\mathbb{F} & -\sqrt{\frac{1}{k}}\ones & \mathbf{0} \\
\mathbf{0} & \sqrt{\frac{1}{k}}\ones & \mathbb{F}
\end{array}\right]\in\R^{k\times (k-1)}\,,
\end{align}
and
\[
\Vb^\top = \left[\begin{array}{ccc}
\sqrt\frac{1}{R}\mathbb{F}^\top\otimes\ones_R^\top & \mathbf{0} 
\\-\sqrt{\frac{2}{(R+1)k}}\ones_{Rk/2}^\top  & \sqrt{\frac{2}{(R+1)k}}\ones_{k/2}^\top \\ 
\mathbf{0} & \mathbb{F}^\top
\end{array}\right]\in\R^{(k-1)\times m}\,.
\]
Above,  $\mathbb{F}\in\R^{k/2\times(k/2-1)}$ is an orthonormal basis of the subspace orthogonal to $\ones_{k/2}$.

We now set the weights inversely proportional to the square-root of the respective class frequency, i.e.  the weight for the first $Rk/2$ majority samples is $\sqrt{n/R} = \sqrt{(R+1)k/(2R)}$, while for the minorities is $\sqrt{R}$ times larger, i.e.,  $\sqrt{n}=\sqrt{(R+1)k/2}$. The relative rates of learning of majority/minority features do not change if we uniformly scale all weights by a constant. Thus, for simplicity and without affecting our analysis, we drop the factor $\sqrt{k/2}$, and, in matrix form, choose
\begin{align}\label{eq:weights STEP}
\Omegab:=\sqrt{R+1}\cdot\diag{\begin{bmatrix} \sqrt\frac{1}{R}\,\ones_{R k/2}^\top & \ones_{k/2}^\top \end{bmatrix}}
\,.
\end{align}

Direct calculation yields the following:
\begin{align}
\Vb^\top\Omegab\Vb &= \sqrt{R+1}\left[\begin{array}{ccc}
{\frac{1}{R}}\mathbb{F}^\top\otimes\ones_R^\top & \mathbf{0} 
\\-\sqrt{\frac{2}{R(R+1)k}}\ones_{Rk/2}^\top  & \sqrt{\frac{2}{(R+1)k}}\ones_{k/2}^\top \\ 
\mathbf{0} & \mathbb{F}^\top
\end{array}\right] \begin{bmatrix}
\sqrt\frac{1}{R}\mathbb{F}\otimes\ones_R & -\sqrt{\frac{2}{(R+1)k}}\ones_{Rk/2} & \zeros
\\
\zeros & \sqrt{\frac{2}{(R+1)k}}\ones_{k/2} & \mathbb{F}
\end{bmatrix}
\nn\\
&= 
\begin{bmatrix}
    \sqrt\frac{R+1}{R}\,\Id_{k/2-1} &\zeros & \zeros
    \\
    \zeros & \frac{\sqrt{R}+1}{\sqrt{R+1}} &\zeros
    \\
    \zeros & \zeros & \sqrt{R+1} \,\Id_{k/2-1}
\end{bmatrix} =:\Lambdab\label{eq:lambdab def}
.
\end{align}
where in the last equation, we defined the diagonal matrix $\Lambdab$ with entries:
\begin{align}\label{eq:lambdab def2}
    \lambda_i:=\begin{cases}
        \sqrt{\frac{R+1}{R}} & i\in[k/2-1]
        \\
        \frac{\sqrt{R}+1}{\sqrt{R+1}} & i=k/2
        \\
        \sqrt{R+1} & i=k/2+1,\ldots,k-1
    \end{cases}
\end{align}
This calculation is handy in the proof of the following theorem which is our main theoretical result. The proof shows that $\Lambdab$ is the effective weight matrix applied to the spectrum of the label matrix.



\begin{theorem}\label{thm:ours} Consider gradient flow (GF) dynamics for minimizing the \textbf{weighted} square-loss UFM \eqref{eq:weighted UFM_L2} with weight matrix as shown in Eq. \eqref{eq:weights STEP} in an $R$-STEP-imbalanced setting. Assume same parameter initialization as in Proposition \ref{thm:saxe}, i.e.,
$
\W(0)=e^{-\delta}\Ub\Rb^\top\,,\,\Hb(0)=e^{-\delta}\Rb\Vb^\top$, for partial orthogonal matrix $\Rb\in\R^{d\times (k-1)}$, initialization scale $e^{-\delta}$ and SVD $\Sbar=\Ub\Sigmab\Vb^\top$. Finally, consider the \textbf{effective weights} $\lambda_i, i\in[k-1]$ defined in Eq. \eqref{eq:lambdab def2}. Then the iterates   $\W(t),\Hb(t)$ of GF evolve as follows:
\begin{align}
    \W(t) = \Ub\sqrt{\Sigmab}\sqrt{\Bb(t)}\Rb^\top \qquad\text{and} \qquad \Hb(t) = \Rb\sqrt{\Sigmab}\sqrt{\Bb(t)}\Vb^\top
\end{align}
for $\Bb(t)=\diag{\beta_1(t),\ldots,\beta_{k-1}(t)}$ with
\begin{align}\label{eq:a_i(t) exact}
    \beta_i(t)=\frac{1}{ 1 + ( \sigma_i e^{2\delta} -1) e^{-2\sigma_i \lambda_i t}}
    , ~i\in[k-1].
\end{align}
Moreover, the time-rescaled factors $\beta_i(\delta t)$ converge to a step function as $\delta\rightarrow\infty$:
\begin{align}
    \beta_i(\delta t)\rightarrow \frac{1}{1+\sigma_i}\ones[{t=T_i}]+ \ones[{t>T_i}],
\end{align}
where $T_i=1/(\sigma_i\lambda_i)$. Thus, the $i$-th component is learned at time  inversely proportional to $\lambda_i\cdot\sigma_i$.
\end{theorem}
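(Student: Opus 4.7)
The plan is to extend the Saxe-style spectral decoupling argument of Proposition~\ref{thm:saxe} to the weighted setting by showing that, under our specific STEP-imbalanced weights, the weighting matrix $\Omegab$ imprints on the spectral dynamics only through the quadratic form $\Vb^\top\Omegab\Vb$. The underlying intuition is that while $\Omegab$ weights individual samples, its only effect on the reduced dynamics in the SVD basis of $\Sbar$ is this quadratic form; if that form happens to be diagonal, the decoupling structure of \citet{saxe2013exact} is preserved with each mode $i$ simply rescaled by the corresponding diagonal entry $\lambda_i$.

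First, I would derive the gradient flow equations for $\Lcrw(\W,\Hb)=\tfrac{1}{2}\|(\Sbar-\W\Hb)\Omegab^{1/2}\|_F^2$, which give $\dt{\W}=-(\Sbar-\W\Hb)\Omegab\Hb^\top$ and $\dt{\Hb}=-\W^\top(\Sbar-\W\Hb)\Omegab$. Then I would introduce the change of variables $\W(t)=\Ub\Wbar(t)\Rb^\top$ and $\Hb(t)=\Rb\Hbar(t)\Vb^\top$, which matches the spectral initialization so that $\Wbar(0)=\Hbar(0)=e^{-\delta}\Id_{k-1}$. Substituting, left-multiplying by $\Ub^\top$, right-multiplying by $\Rb$, and using $\Sbar=\Ub\Sigmab\Vb^\top$ together with the partial orthogonality $\Ub^\top\Ub=\Rb^\top\Rb=\Vb^\top\Vb=\Id_{k-1}$ yields the reduced system
\begin{align*}
\dt{\Wbar} = -\bigl(\Sigmab-\Wbar\Hbar\bigr)\bigl(\Vb^\top\Omegab\Vb\bigr)\Hbar^\top, \quad \dt{\Hbar} = -\Wbar^\top\bigl(\Sigmab-\Wbar\Hbar\bigr)\bigl(\Vb^\top\Omegab\Vb\bigr).
\end{align*}

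The crucial algebraic step is the identity $\Vb^\top\Omegab\Vb=\Lambdab$, with $\Lambdab$ the diagonal matrix in Eq.~\eqref{eq:lambdab def2}. Given the block-structured closed form of $\Vb$ from Eq.~\eqref{eq:Vb_def_main} and of $\Omegab$ from Eq.~\eqref{eq:weights STEP_main}, this reduces to a direct block-wise calculation: the three row blocks of $\Vb^\top$ (corresponding to \majmaj, \majmin, and \minmin components) have disjoint support along the sample axis, and $\Omegab$ is piecewise constant along these supports, so the inner product picks up only the weight of the corresponding block scaled by the normalization in $\Vb$. Using $\mathbb{F}^\top\mathbb{F}=\Id_{k/2-1}$ makes the off-diagonal blocks vanish. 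Once this identity is established, $\Wbar,\Hbar$ remain diagonal for all $t$ (preserved from the diagonal initialization), the system decouples into $k-1$ scalar coupled ODEs, and by the symmetry $\Wbar(0)=\Hbar(0)$ each mode reduces to a single scalar sigmoid-type ODE identical in form to Saxe's but with effective rate $\sigma_i\lambda_i$. This immediately yields Eq.~\eqref{eq:a_i(t) exact}, and the step-function limit as $\delta\to\infty$ follows exactly as in Proposition~\ref{thm:saxe}, with learning time $T_i=1/(\sigma_i\lambda_i)$.

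The main obstacle is the verification of $\Vb^\top\Omegab\Vb=\Lambdab$, since this is the one place where the specific STEP structure of the weights and labels enters. The remaining reduction to Saxe's scalar dynamics is essentially mechanical once this identity is in place. A subtle point worth flagging is that the diagonality of $\Vb^\top\Omegab\Vb$ is not automatic for arbitrary weightings—it relies on the fact that the weights~\eqref{eq:weights STEP_main} are constant within each class block, which in turn aligns with the block structure of $\Vb$; choosing $\omega_{y_i}\propto \hat{\pi}_{y_i}^{-\gamma}$ with the specific exponent $\gamma=1/2$ is exactly what makes $\Lambdab$ rescale the effective spectrum into near-uniformity, yielding the flattening phenomenon observed empirically.
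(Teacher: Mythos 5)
Your proposal follows the paper's proof essentially step for step: same gradient-flow equations, same change of variables $\W=\Ub\Wbar\Rb^\top$, $\Hb=\Rb\Hbar\Vb^\top$, same reduction to a decoupled Saxe-type system with effective rates $\sigma_i\lambda_i$ via the identity $\Vb^\top\Omegab\Vb=\Lambdab$, and the same step-function limit. So the approach is correct and not genuinely different from the paper's.

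One small but worth-fixing imprecision in your sketch of the key identity. You claim that the three row blocks of $\Vb^\top$ have ``disjoint support along the sample axis'' and that $\mathbb{F}^\top\mathbb{F}=\Id$ makes the off-diagonal blocks vanish. Neither claim is quite right. Looking at Eq.~\eqref{eq:Vb_def_main}, the \majmaj rows are supported on the first $Rk/2$ (majority) columns and the \minmin rows on the last $k/2$ (minority) columns, but the single \majmin row is supported on \emph{all} columns, so the (1,2) and (2,3) blocks of $\Vb^\top\Omegab\Vb$ involve overlapping supports. What actually kills them is the orthogonality $\mathbb{F}^\top\ones_{k/2}=\zeros$ (together with $\Omegab$ being a constant multiple of the identity on each class block), not $\mathbb{F}^\top\mathbb{F}=\Id$; the latter only governs the diagonal \majmaj and \minmin blocks. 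The (1,3) block does vanish by disjoint support. Since you commit to a direct block-wise computation anyway, this does not invalidate the plan, but the stated rationale would mislead you if you relied on it.
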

\begin{proof}
 The GF updates are given by:
\begin{subequations}\label{eq:original weight}
\begin{align}
    \dt{\W(t)} &= -\left(\smatbar\Omegab^{1/2}-\W(t)\Hb(t)\Omegab^{1/2}\right)\Omegab^{1/2}\Hb(t)^\top =
    -\left(\smatbar-\W(t)\Hb(t)\right)\Omegab\Hb(t)^\top
    \\
        \dt{\Hb(t)} &= -\W(t)^\top\left(\smatbar\Omegab^{1/2}-\W(t)\Hb(t)\Omegab^{1/2}\right) \Omegab^{1/2} = -\W(t)^\top\left(\smatbar-\W(t)\Hb(t)\right) \Omegab
\end{align}
\end{subequations}
where $\W(0)=e^{-\delta}\Ub\Rb^\top$ and $\Hb(0)=e^{-\delta}\Rb\Vb^\top$. As in \cite{saxe2013exact,saxe2019mathematical} change variables to $\Wbar(t), \Hbar(t)$ defined as
\[
\W(t)=\Ub\Wbar(t)\Rb^\top, \qquad\text{and}\qquad\Hb(t)=\Rb\Hbar(t)\Vb^\top\,.
\]
At $t=0$, these matrices are diagonal and equal to $e^{-\delta}\Id$ by initialization assumption. Substituting these in the update equations \eqref{eq:original weight} and also using $\smatbar=\Ub\Sigmab\Vb^\top$, we get that
\begin{subequations}\label{eq:original weight 2}
\begin{align}
    \Ub\dt{\Wbar(t)}\Rb^\top &= 
    \Ub\left(\Sigmab-\Wbar(t)\Hbar(t)\right)\Vb^\top\Omegab\Vb\Hbar(t)^\top\Rb^\top
    \\
        \Rb\dt{\Hbar(t)}\Vb^\top &= \Rb\Wbar(t)^\top\left(\Sigmab-\Wbar(t)\Hbar(t)\right)\Vb^\top \Omegab\,.
\end{align}
\end{subequations}
Left and Right multiplying these with the partial unitary matrices $\Ub,\Rb,\Vb$ and recalling from \eqref{eq:lambdab def} that $\Vb^\top\Omegab\Vb=\Lambdab$ (given in \eqref{eq:lambdab def2}), we arrive at
\begin{subequations}\label{eq:original weight 3}
\begin{align}
    \dt{\Wbar(t)} &= -
    \left(\Sigmab-\Wbar(t)\Hbar(t)\right)\Lambdab\Hbar(t)^\top
    \\
        \dt{\Hbar(t)}&= - \Wbar(t)^\top\left(\Sigmab-\Wbar(t)\Hbar(t)\right)\Lambdab\,.
\end{align}
\end{subequations}
At this point, recall that $\Lambdab$ is diagonal, and of course so is $\Sigmab$. Since $\Wbar,\Hbar$ are initialized to be diagonal matrices, they will remain diagonal throughout training. We thus arrive at a decoupled system of differential equations as in the unweighted case treated by \cite{saxe2013exact,bach_saxe}. 

In fact, the update equations are same as in the unweighted case (compare to \citep[Eq. 39]{bach_saxe}) 
with the only difference being the presence of the \emph{effective weight matrix} $\Lambdab$. Having reached this point, we can follow almost the exact same steps as in the proof of \citep[Thm. 1]{bach_saxe}. For brevity and because the adjustments are easy to check we directly present the formula for the diagonal entries $\wbars_i(t),\hbars_i(t)$ of $\Wbar(t), \Hbar(t)$ that solve \eqref{eq:original weight 3} (see \citep[Eq. 48]{bach_saxe}). For $i\in[k-1]$:
\begin{align}
    \wbars_i(t) = \hbars_i(t) = \sqrt{\frac{\sigma_i e^{2\sigma_i\lambda_it}}{\sigma_i e^{2\delta} - 1 + e^{2\sigma_i\lambda_i t}}} = \sqrt{\sigma_i}\sqrt\frac{1}{1+(\sigma_i e^{2\delta} - 1)e^{-2\sigma_i\lambda_i t}} =: \sqrt{\sigma_i}\sqrt{\beta_i(t)}\,,
\end{align}
where we defined
\[
\beta_i(t) := \frac{1}{1+(\sigma_i e^{2\delta} - 1)e^{-2\sigma_i\lambda_i t}} \,,i\in[k-1]
\]
Note the similarity to the unweighted case (Eq. \eqref{eq:a_i(t) exact}) only here the rate at which $\beta_i(t)$ approaches $1$ is determined by $\sigma_i\lambda_i$, rather than by $\sigma_i$ alone. 

From this, we can also easily determine the limit of  $\beta_i(\delta t)$ as $\delta\rightarrow\infty$ (see \citep[Eq. 64]{bach_saxe}):
\begin{align}
    \beta_i(\delta t) := \frac{e^{2\sigma_i\lambda_i \delta t}}{e^{2\sigma_i\lambda_i \delta t}+(\sigma_i e^{2\delta} - 1)} = \frac{e^{2\delta (\sigma_i\lambda_i t - 1)}}{e^{2 \delta (\sigma_i\lambda_i t-1)}+(\sigma_i  - e^{-2\delta})} \stackrel{\delta\rightarrow\infty}{\longrightarrow} \frac{1}{1+\sigma_i}\ones[{t=\frac{1}{\sigma_i\lambda_i}}]+ \ones[{t>\frac{1}{\sigma_i\lambda_i}}]\,.
\end{align}
\end{proof}

Combining Proposition \ref{thm:saxe} (for vanilla ERM) with Theorem \ref{thm:ours} (for reweighting) we arrive at the following corollary comparing the relative rates of learning of \majmaj, \majmin, \minmin features.
\begin{corollary}\label{cor:delta}
    Assume the spectral initialization of Prop. \ref{thm:saxe} and Thm. \ref{thm:ours} with vanishing initialization. Both vanillaERM and reweighting dynamics show a phase transition. On the one hand, Vanilla ERM learns the features in the order \majmaj, \majmin, \minmin at times $1/\sqrt{R}$, $1/\sqrt{(R+1)/2}$ and $1$, respectively. On the other hand, weigthed ERM \eqref{eq:weighted UFM_L2} learns simultaneously the features \majmaj and \minmin at time $1/\sqrt{R+1}$ before they learn the $\majmin$ feature, closely after, at time $\sqrt{2}/(\sqrt{R}+1)$.
\end{corollary}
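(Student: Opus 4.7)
The plan is to obtain Corollary \ref{cor:delta} as a direct consequence of the vanishing-initialization limit in Proposition \ref{thm:saxe} (for vanilla ERM) and Theorem \ref{thm:ours} (for reweighted ERM), combined with the closed-form expressions for the singular values $\sigma_i$ of $\Sbar$ in Eq. \eqref{eq:Sigmab} and the effective weights $\lambda_i$ in Eq. \eqref{eq:lambdab def2}. In both cases, the factor $a_i(\delta t)$ or $\beta_i(\delta t)$ becomes a step function as $\delta\to\infty$, so the entire analysis reduces to identifying, for each principal component, the transition time $T_i$.

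For vanilla ERM, Proposition \ref{thm:saxe} gives $T_i = 1/\sigma_i$. Substituting the three distinct singular-value levels from Eq. \eqref{eq:Sigmab} immediately yields transition times $1/\sqrt{R}$ for the $(k/2-1)$ \majmaj components, $1/\sqrt{(R+1)/2} = \sqrt{2/(R+1)}$ for the single \majmin component, and $1$ for the $(k/2-1)$ \minmin components. The ordering $1/\sqrt{R} < \sqrt{2/(R+1)} < 1$ is elementary for $R > 1$.

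For reweighted ERM, Theorem \ref{thm:ours} gives $T_i = 1/(\sigma_i\lambda_i)$. The key step is to multiply $\sigma_i$ by $\lambda_i$ from Eq. \eqref{eq:lambdab def2} component-wise: the \majmaj product is $\sqrt{R}\cdot\sqrt{(R+1)/R} = \sqrt{R+1}$, the \minmin product is $1\cdot\sqrt{R+1}=\sqrt{R+1}$, and the \majmin product is $\sqrt{(R+1)/2}\cdot(\sqrt{R}+1)/\sqrt{R+1} = (\sqrt{R}+1)/\sqrt{2}$. Hence \majmaj and \minmin transition simultaneously at $T = 1/\sqrt{R+1}$, while \majmin transitions at $\sqrt{2}/(\sqrt{R}+1)$.

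The only nontrivial sub-step is to verify the ordering in the reweighted case, namely $1/\sqrt{R+1} < \sqrt{2}/(\sqrt{R}+1)$. This is equivalent to $\sqrt{R}+1 < \sqrt{2}\sqrt{R+1}$; squaring both sides and simplifying reduces the inequality to $(\sqrt{R}-1)^2 > 0$, which holds for every $R > 1$. I do not foresee any real obstacle here: once the two dynamical statements are invoked, Corollary \ref{cor:delta} follows entirely from plugging in the closed-form spectra of $\Sbar$ and $\Lambdab$ and a single elementary inequality. The conceptual content of the corollary is that the effective weights $\lambda_i$ are inversely correlated with $\sigma_i$ in a way that equalizes the products $\sigma_i\lambda_i$ across the \majmaj/\minmin components, which is exactly what the substitution makes explicit.
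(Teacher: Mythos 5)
Your proposal is correct and follows essentially the same route as the paper: invoke the vanishing-initialization step-function limits of Prop.~\ref{thm:saxe} and Thm.~\ref{thm:ours}, substitute the closed-form $\sigma_i$ from Eq.~\eqref{eq:Sigmab} and $\lambda_i$ from Eq.~\eqref{eq:lambdab def2}, and read off $T_i = 1/\sigma_i$ (vanilla) or $T_i = 1/(\sigma_i\lambda_i)$ (reweighted). Your explicit verification of the ordering inequality $(\sqrt{R}-1)^2>0$ is a small addition the paper leaves implicit, but the argument is the same.
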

\begin{proof}
    This is direct consequence of the phase-transition like rates of Prop. \ref{thm:saxe} and Thm \ref{thm:ours} at infinitesimal initialization. For reweighted loss, we further compute the products $\lambda_i \sigma_i$ of the effective weights $\lambda_i$ in Eq. \eqref{eq:lambdab def2} and the diagonal entries $\sigma_i$ of $\Sigmab$ in \eqref{eq:Sigmab}.
\end{proof}

\begin{figure*}[t]
   \centering
   \includegraphics[width=\textwidth]{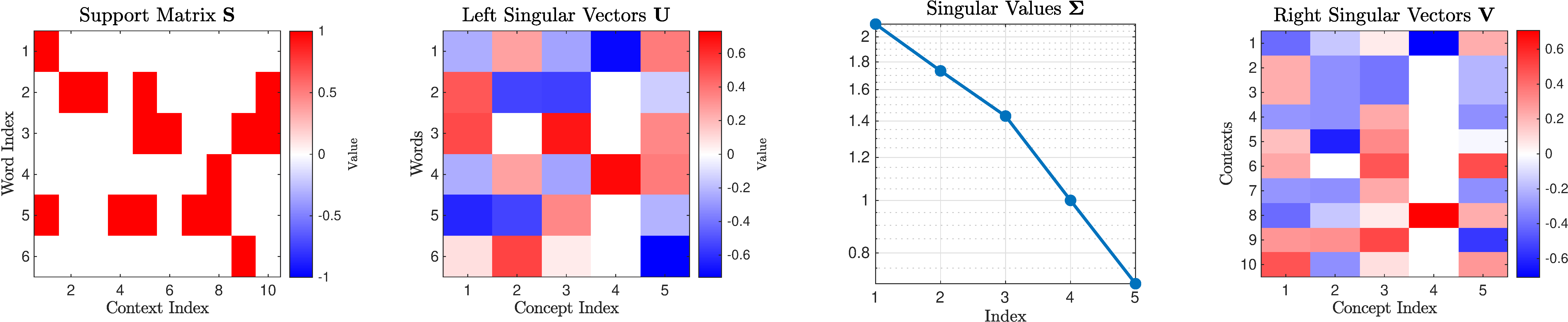}
   \vspace{1em}
   \hspace{-0.38in}\includegraphics[width=1.1\textwidth]{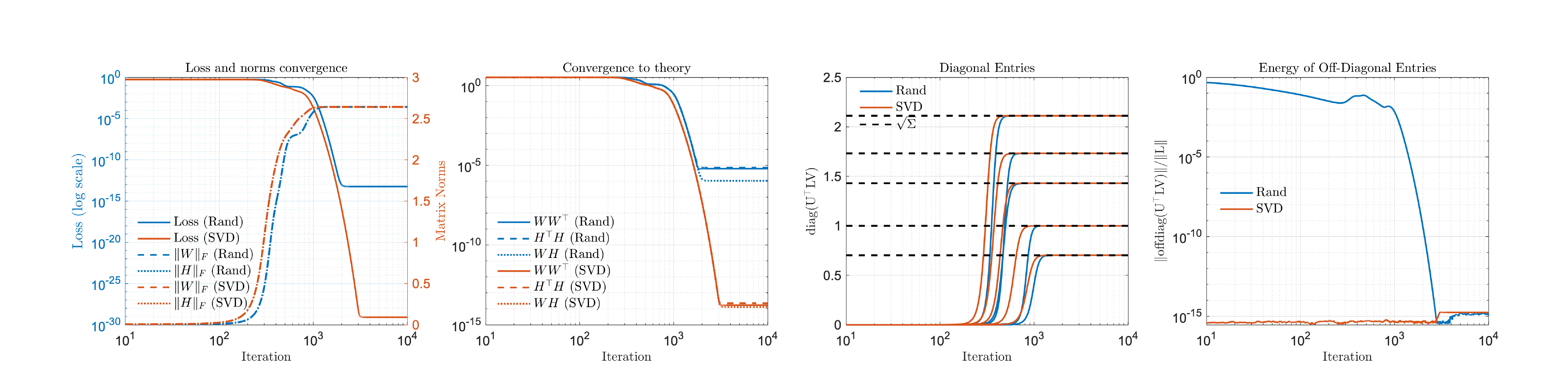}
\captionsetup{width=\textwidth}
   \caption{  \textbf{(Top)} Support matrix and SVD factors of centered support matrix for a synthetic example. \textbf{(Bottom)} Training dynamics of GD minimization of UFM with square loss (Eq. \eqref{eq:UFM_L2} for two initializations: (i) SVD: initialize $\W$ and $\Hb$ as per Thm. \ref{thm:saxe} for $\delta=8$. (ii) Rand: intialize $\W$ and $\Hb$ random Gaussian scaled to match the norm of SVD initialization. Dynamics with the two initialization are shown in orange (SVD) and blue (Rand), respectively. Qualitatively the behavior is similar. \emph{Left:} Training loss and norms of paramteres. \emph{Middle-Left:} Convergence of word and context gram-matrices and of logits to the theory predicted by Thm. \ref{thm:saxe}. \emph{Middle-Right:} Convergence of singular values of logit matrix to those of $\Sigmab$ (see Thm. \ref{thm:saxe}. \emph{Right:} Projection of logits to subspace orthogonal to $\Ub$ and $\Vb$; Logits with Rand initialization initially have non-zero projection but it becomes zero as training progresses.}
   \label{fig:convergence}
\end{figure*}

\end{document}